\def\eqref#1{equation~\ref{#1}}
\def\1{\bm{1}}
\DeclareMathAlphabet{\mathsfit}{\encodingdefault}{\sfdefault}{m}{sl}
\SetMathAlphabet{\mathsfit}{bold}{\encodingdefault}{\sfdefault}{bx}{n}
\newtheorem{prop}{Proposition}
\crefname{section}{Sec.}{Secs.}
\Crefname{section}{Section}{Sections}
\Crefname{table}{Table}{Tables}
\crefname{table}{Tab.}{Tabs.}
\begin{document}

\title{Towards Tokenized Human Dynamics Representation}

\author{%
    Kenneth Li$^{1,2}$ \quad Xiao Sun$^1$ \quad Zhirong Wu$^1$ \quad Fangyun Wei$^1$ \quad Stephen Lin$^1$ \\
    $^1$Microsoft Research Asia \quad $^2$Harvard University\\
    \texttt{\{xias, wuzhiron, fawe, stevelin\}@microsoft.com} \quad \texttt{ke\_li@g.harvard.edu} \\
}

\twocolumn[{%
	\maketitle
	\vspace{-0.75cm}
	\renewcommand\twocolumn[1][]{#1}%
	\begin{center}
		\centering
        \includegraphics[width=\linewidth]{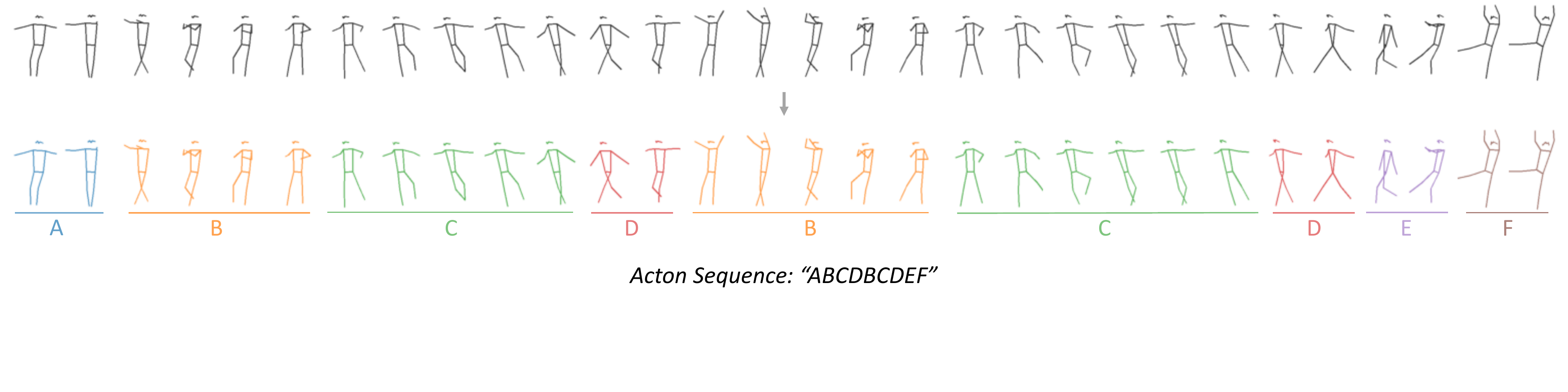}
        \vspace{-18mm}
        \captionof{figure}{We present a self-supervised technique for discovering recurring temporal patterns, called {\em actons}, in long kinematic videos like human dance. From a collection of such videos without any annotations, we extract a set of actons and use this lexicon to segment and model motion sequences as shown above (skeleton sequence temporally downsampled by 10$\times$).}
   \label{teaser}
	\end{center}%
}]

\begin{abstract}
\vspace{-5mm}
For human action understanding, a popular research direction is to analyze short video clips with unambiguous semantic content, such as jumping and drinking. However, methods for understanding short semantic actions cannot be directly translated to long human dynamics such as dancing, where it becomes challenging even to label the human movements semantically.
Meanwhile, the natural language processing (NLP) community has made progress in solving a similar challenge of annotation scarcity by large-scale pre-training, which improves several downstream tasks with one model. In this work, we study how to segment and cluster videos into recurring temporal patterns in a self-supervised way, namely acton discovery, the main roadblock towards video tokenization. We propose a two-stage framework that first obtains a frame-wise representation by contrasting two augmented views of video frames conditioned on their temporal context. The frame-wise representations across a collection of videos are then clustered by K-means. Actons are then automatically extracted by forming a continuous motion sequence from frames within the same cluster. We evaluate the frame-wise representation learning step by Kendall's Tau and the lexicon building step by normalized mutual information and language entropy. We also study three applications of this tokenization: genre classification, action segmentation, and action composition. On the AIST++ and PKU-MMD datasets, actons bring significant performance improvements compared to several baselines.\footnote{Accomplished during K. Li's internship at Microsoft Research Asia.} \footnote{Code: \url{https://github.com/likenneth/acton}.}
\vspace{-4mm}
\end{abstract}

\section{Introduction}
In the past decade, computer vision research owes much of its success to the construction of large labeled datasets.
The labels in the datasets provide semantics that associate visual data with natural language descriptions. Labeling human actions by semantic descriptions, however, limits the scope of the action space, as not all human actions can be clearly and unambiguously defined by language. Because of this, different datasets ask for a different facet of understanding for video, leading to highly specialized models.

Facing a similar challenge, the natural language processing (NLP) community proposes to use a large-scale pre-trained language model~\cite{vaswani2017attention,brown2020language} to holistically understand language and improve multiple downstream tasks with one model, such as machine translation, reading comprehension, and emotion classification. In order to bring these advantages into human dynamics analysis, we need to first tokenize videos. Similar to moving from speech-based language processing to (sub)word-based processing, tokzenization would not only enable processing longer videos, but also make the prediction for downstream tasks more transparent. 

Long and complex human actions in videos are difficult to depict by semantic language descriptions.
Though there exist a few iconic movements that are associated with a clear label, such as ``the moonwalk'' by Michael Jackson, a majority of dance movements are best conveyed by showing examples.
Due to this complexity, a written transcription of dance uses a figure-based notation system called Labanotation~\cite{hutchinson1977labanotation} to record human motions.

In this work, we raise the question of how to learn meaningful tokenzied representations of a long human dance without any supervision.
We observe that, though complex over a long time period, human dances often exhibit recurring temporal patterns.
We thus propose an atomic representation called {\em actons} as a mid-level representation for modeling long human action videos.
Given this mid-level representation, human dances could be represented by a sequence of acton tokens, similar to words of a lexicon in a sentence.
Building this acton dictionary may facilitate further applications such as action retrieval, dance translation, and motion stylization. 

We propose a two-step self-supervised approach for clustering and segmenting long human dances into actons.
First, we obtain a frame-level representation by training a Temporal Alignment Network (TAN) through contrastive learning~\cite{chen2020simple}.
Given a 3D skeleton sequence, we augment the sequence into two views by considering rotation, translation and speed augmentations.
Frames in one view are mapped to corresponding frames in the other view as positives.
Contrastive learning then discriminates between positive and negative pairs at the frame level using a Transformer backbone. TAN is discussed in Section~\ref{sec: tan}.

With the pretrained TAN, the embeddings can be used to retrieve frames in a similar action context.
We therefore propose a simple way to discover actons, by jointly clustering all frames in a video dataset by K-means.
Actons are then automatically segmented from the long actions by finding continuous sequences within the same cluster assignment. Unsupervised lexicon building is discussed in Section~\ref{sec: cluster}.

We conduct experiments on the AIST++ and PKU-MMD datasets~\cite{aist-dance-db,li2021learn,liu2017pku}, which provide 3D skeletons from multi-view reconstruction. The AIST++ dataset contains dances of 10 genres with basic motions and advanced complex motions; PKU-MMD is an action detection dataset containing 1076 long video sequences. Alignment quality is measured by Kendall's Tau~\cite{kendall1938new, dwibedi2019temporal} and clustering performance is evaluated by normalized mutual information and language entropy.
We also present three applications of the acton representation: genre classification, action detection, and action composition.
Extensive experimental results demonstrate the effectiveness of the method against established baselines, TCN and TCC~\cite{dwibedi2019temporal,sermanet2018time}. We summarize the contributions of this work as follows:
\begin{itemize}
    \item This work raises a novel question of how to discover meaningful elementary representations from a long human action sequence \textbf{without any supervision}, namely, unsupervised acton discovery. Analogous to words in a language, the discovered actons can serve as words in a body language, which then can be further used for action assessment, translation and composition. This is valuable to domain specialists like choreographers, athletes and so on. To our knowledge, this is the first time that this interesting, important and challenging task has been presented, discussed, explored and evaluated in depth.
    \item We believe the biggest technical challenges of this task lie in learning strong frame-wise, context-aware and temporally aligned features. To this end, we propose a novel Temporal Alignment Network (TAN) together with effective speed augmentation and negative sampling techniques to address the above issues.   
    \item Extensive experiments, including ablation studies on the augmentation and negative sampling strategies, as well as comparison with the state-of-the-art self-supervised sequence learning methods Time-Contrastive Networks (TCN)~\cite{dwibedi2019temporal} and Temporal Cycle-Consistency (TCC)~\cite{sermanet2018time}, demonstrate the effectiveness of the proposed method.
    \item We use the discovered actons as tokenized features and apply them to the application of genre classification, action detection and choreography based on random acton composition.
\end{itemize}

\section{Related Works}

\paragraph{Self-Supervised Learning for Action Recognition}  
An effective representation for action recognition should capture the temporal order of human movement, the speed of movement, as well as the detailed motions.
For self-supervised learning, pretext tasks which encode such representations include learning the arrow of time~\cite{wei2018learning}, shuffle and learn~\cite{misra2016shuffle,lee2017unsupervised}, and learning the temporal speed~\cite{benaim2020speednet}.
Recent advances on contrastive learning for image representation learning~\cite{wu2018unsupervised,chen2020simple} also show promising results on videos.
Research along this direction has achieved the state-of-the-art by contrasting on the temporal dimension~\cite{gordon2020watching,sermanet2018time}, distilling motion representations~\cite{han2020self}, and designing better video augmentations~\cite{qian2020spatiotemporal}.

The Kinetics human action video dataset~\cite{kay2017kinetics} is the most popular dataset for pretraining action representations. However, the dataset is curated since each video is trimmed to temporally focus on the underlying action.
The holistic representations for action recognition learned from Kinetics cannot be scaled to long kinematic videos that describe a series of complex motions.

\paragraph{Video Alignment} 
In contrast to holistic representations, a frame-level action representation may effectively capture temporal action progressions. For example, the act of pouring may consist of sub-action phases such as grabbing a bottle, tilting it to allow liquid to flow out, and putting down the bottle. The transformation which maps the frames of one video to another can be learned by CCA~\cite{andrew2013deep} and Gaussian mixture~\cite{sener2018unsupervised,swetha2021unsupervised} losses.
Self-supervised approaches to alignment include cycle consistency~\cite{dwibedi2019temporal} and time-contrastive networks~\cite{sermanet2018time}.
The video alignment task assumes that all data in the same action category are loosely aligned.
Our task is much more challenging because the training data is uncurated, spanning multiple categories of actions.

\paragraph{Unsupervised Pattern Discovery from Sequences} 
Natural sensory signals, of audio, vision or text, often come in the form of long sequences.
An important research topic~\cite{oates2002peruse} is to discover recurring patterns in the sequences.
Early works~\cite{de1996unsupervised,park2007unsupervised} show that it is possible to extract words and linguistic entities from audio data without any supervision.
Further research~\cite{myers1981level} uses recurring words for connected word recognition. 
Recently, a zero resource speech challenge~\cite{nguyen2020zero} has demonstrated that neural language models can be successfully learned from speech recordings. 
Pattern discovery models have also been applied to discover meaningful patterns in music~\cite{dannenberg2003pattern}.
Research on bioinformatics~\cite{brazma1998approaches,rigoutsos1998combinatorial} has found structurally important gene patterns (motifs) in DNA sequences. The key method for pattern discovery is to perform sequence-to-sequence alignment, e.g. using dynamic time warping~\cite{sakoe1978dynamic}, and aggregate aligned sequences into clusters. 

For applications of unsupervised pattern discovery on videos, topic models such as latent Dirichlet allocation are applied to surveillance videos to discover recurrent activities on crossroads~\cite{emonet2011extracting,hospedales2009markov,emonet2013temporal,varadarajan2013sequential}. The alignment between short video clips is measured by the similarity of features, which often use histograms of optical flow vectors. Instead of traffic activities, the goal of our work is to analyze human motions, which are of significantly greater complexity.

\paragraph{Atomic Actions} 
In contrast to action classification, the study of atomic actions aims to provide a detailed representation of complex action sequences. With its combinatorial structure, such a representation can also be used to reduce the complexity of recognition systems. In pursuit of this, 
FineGym~\cite{shao2020finegym} provides a fine-grained temporal annotation for gymnastic actions with up to 530 acton classes, which are organized into three semantic and two temporal hierarchies. Despite the expense to annotate them, acton classes are generally not shared between activities, e.g., between cooking and gymnastics. Therefore, a method for simultaneous self-supervised acton discovery and segmentation is needed to facilitate analysis of complex human actions.
Methods have been proposed for unsupervised segmentation of complex activities~\cite{sener2018unsupervised,swetha2021unsupervised}, but they presume knowledge of acton classes and canonical orders. VideoBERT\cite{sun2019videobert} uses clustered visual words but does not take different action speed into consideration by assuming an identical word length.

\section{Methodology}
We start by analyzing the technical goal of acton discovery and proposing a two-stage framework. Then we elaborate on a design choice for each of the two stages. 
\subsection{Two-stage Framework}
\begin{figure*}[t]
\centering
\includegraphics[width=\linewidth]{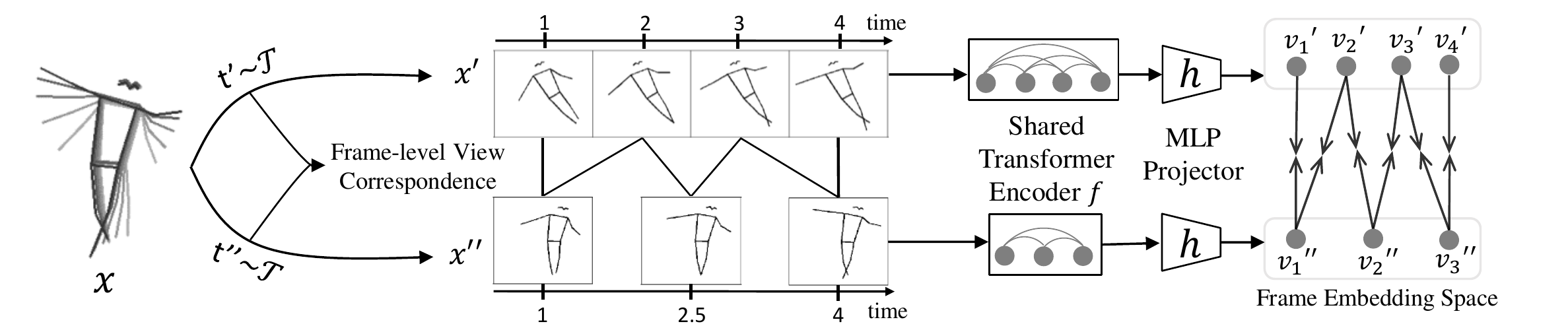}
\vspace{-5mm}
\caption{Illustration of Temporal Alignment Network (TAN), the first part of acton discovery system. Two separate augmentations are sampled from the an augmentation family $\mathcal{T}$ (\textbf{speed}, \emph{rotation and translation}) to obtain two correlated views probably of different lengths. Ground-truth correspondence between frames of the two views is obtained at the augmentation step. All frames are projected into a common  embedding space, where a \emph{frame-wise} contrastive loss is minimized.}
\vspace{-6mm}
\label{f1:spy}
\end{figure*}

\paragraph{Frame-wise Representation Learning} We observe that recurring temporal patterns, i.e.~{\em actons}, occur in dance collections across multiple dancers and genres.
Though the repeated patterns are essentially the same actions, they may vary in speed, rotation, range of motion, as well as other properties not intrinsic to the action itself.
Therefore, we first seek to learn, in a self-supervised fashion, a representation that is effective for clustering actons.

Inspired from prior works that discover {\em motifs} in the discrete domain for bioinformatics~\cite{brazma1998approaches,rigoutsos1998combinatorial} and in the continuous domain for audio~\cite{de1996unsupervised, nguyen2020zero,dannenberg2003pattern,park2007unsupervised}, 
the task of acton discovery uniquely calls for the following properties in representation learning: 

\begin{itemize}
    \item Frame-wise features. Unlike action recognition where features in the temporal dimension are usually globally pooled for an overall representation, the features in this task should be at the local frame level since the boundaries of actons have yet to be determined.
    \item Context-aware. The feature of a frame should nevertheless represent pose status in the context of a particular motion. For example, the same pose that appears in different motions should have different feature representations. 
    \item Temporal alignment. An acton can generally be performed at different speeds, so the learned feature is expected to help identify the same acton regardless of speed. The feature distance of the same frame at different motion speeds should be close to facilitate temporal alignment. 
    \item Intra-compactness and Inter-separability. A key to effective clustering is to learn discriminative features that enlarge the decision margins between clusters while reducing the variations within each cluster~\cite{liu2016large}. Intra-class compactness and inter-class separability between learned features should therefore be encouraged.
    \item Temporal continuity. The frames of an acton should be continuous in time. To facilitate acton segmentation, the noise among frames within a potential acton should be suppressed by drawing the features of adjacent frames closer together.
\end{itemize}

Formally, given a long kinematic video $\bm{x} \in \mathbb{R}^{T \times 3J}$, our goal is to learn a neural network encoder $f(\cdot)$ that extracts representation vectors $\bm{z} \in \mathbb{R}^{T \times F}$ from $\bm{x}$,
\begin{equation}
    \bm{z} = f(\bm{x}).
\end{equation}
Here, $T$ and $J$ are the number of video frames and human joints respectively and $F$ is the dimension of latent feature vectors. Note that $\bm{z}$ preserves the same temporal resolution as $\bm{x}$. 

\paragraph{Lexicon Building}
The second step for acton discovery is lexicon building, which involves two correlated tasks:
\begin{itemize}
    \item Clustering. Given a segmentation method, the training set videos are segmented into short sequences. The model is required to cluster these sequences into a lexicon. 
    \item Segmentation. Given the clustered actons, the model is required to segment an inference-time sequence into non-overlapping segments and assign an acton label to each segment. 
\end{itemize}

Earlier methods~\cite{de1996unsupervised,park2007unsupervised} mainly seek \emph{part-coverage}, which discovers isolated segments in sequences and excludes many background frames. In another problem formulation, namely \emph{full-coverage}, entire sequences are segmented and clustered into word-like units~\cite{kamper2016unsupervised,kamper2017embedded}. In this setting, these two tasks can be treated as a joint minimization problem of the sum of distances of each segment to their corresponding cluster center. In speech processing, there is a similar problem called unsupervised word discovery~\cite{nguyen2020zero} and is often addressed using probabilistic Bayesian models~\cite{kamper2017segmental,kamper2017embedded}. Recently, K-Means has been incorporated into these Bayesian models to ease the computational burden~\cite{kamper2017segmental,nguyen2020zero}. To achieve a complete tokenization, we opt for the \emph{full-coverage} formulation. 

\subsection{Temporal Alignment Network}
\label{sec: tan}

As illustrated in Figure~\ref{f1:spy}, we extend the framework of SimCLR~\cite{chen2020simple} to \textbf{frame-wise} contrastive learning, which learns representations by maximizing agreement between frames of the same semantic meaning but under different speeds.
All frame representations (for example, $\bm{z}(i)$ for the $i$-th frame) are transformed by an MLP projection head $h(\cdot)$ to a latent vector $\bm{v}(i)$ on a $F$-dim $L_2$ unit sphere,

\begin{equation}
    \bm{v}(i) = \frac{h(\bm{z}(i))}{\lVert h(\bm{z}(i)) \rVert}.
\end{equation}

As we seek here to learn frame-wise representations, we consider a frame conditioned on its temporal action context as an individual instance.
Given a minibatch of $N$ video clips $\{\bm{x}_n\}$,
denote the $i$-th frame in the video $\bm{x}_n$ as $\bm{x}_n(i)$.
We augment the sequence into two views by the same family of augmentations, $\mathcal{T}$, composed of random translation, rotation around the gravity direction, and speed change.
In order to preserve temporal smoothness, the random augmentations are applied \emph{consistently throughout a video}.
As a result, the frame $\bm{x}_n(i)$ is augmented into $\bm{x}'_n(i)$ and $\bm{x}''_n(i)$ in the two views.

We use $\bm{v}'_n(i)$ and $\bm{v}''_n(i)$ to denote the latent embeddings for the two frames.
Then, the contrastive loss function for a positive pair of frame examples is formally defined as:
\begin{equation}
\label{eq:contrastive}
\resizebox{1\hsize}{!}{%
$\mathcal{L}^{i}_{n} = -\text{log}\dfrac{\text{exp}(\bm{v}'_n(i) \cdot \bm{v}''_n(i)) /\tau)}{\text{exp}(\bm{v}'_n(i) \cdot \bm{v}''_n(i) /\tau) + \underset{x_k(j) \in D^{i}_n}{\sum} \text{exp}(\bm{v}'_n(i) \cdot \bm{v}_k(j) /\tau)},$
}
\end{equation}
where $\tau$ is the temperature parameter. It is worth noting that our contrastive instance ($\bm{z}_n(i) = f(\bm{x}_n)[i]$) differs from those in standard image or video based contrastive methods whose features are learned either from a single image ($\bm{z}_n(i) = f(\bm{x}_n[i])$)~\cite{chen2020simple} or to represent the entire video ($\bm{z}_n = f(\bm{x}_n)$)~\cite{feichtenhofer2021large}.

Due to random augmentations, some of the frames in one view may no longer have correspondences in the other view, such as when speed augmentation changes the sequence length.
We simply neglect these frames during contrastive learning. 
The notation $\mathcal{D}_n^i$ is used to denote the set of negative samples for the frame instance $\bm{x}_n(i)$. 
We symmetrize the loss in Eq.~\ref{eq:contrastive} by swapping $\bm{x}'_n(i)$ and $\bm{x}''_n(i)$ to compute $\widetilde{\mathcal{L}}^{i}_{n}$. The overall contrastive loss is formulated as:
\begin{equation}
\mathcal{L}^{TAN} = \frac{1}{2NT}\sum_{n=1}^{N}\sum_{i=1}^{T}(\mathcal{L}^{i}_{n} + \widetilde{\mathcal{L}}^{i}_{n}).
\end{equation}
\vspace{-8mm}

\paragraph{Negative Samples}
For typical instance discrimination, all instances other than the reference are considered as negatives.
However, for frame-wise representation learning, frames that are temporally close or in the same video may not be appropriate choices as negatives.
We consider the following alternatives for negative pair selection.

\begin{itemize}
    \item All frames in batch: all frames from every video in one batch other than the reference frame are considered negatives. In this case, $\mathcal{D}_n^i = \{\bm{x}_k(j)~|~ k\neq n \text{ or } j\neq i\}$. This is a straightforward extension of instance discrimination to frame-level representation learning.
    
    \item Excluding close frames: ignore negative samples within the same video clip. In this case, $\mathcal{D}_n^i = \{\bm{x}_k(j)~|~ k\neq n\}$.
    Frames within a video are usually highly correlated with one another. Ignoring them as negatives may help to learn a smooth representation over time.

\end{itemize}
For encoding the representation network $f(\cdot)$ of pose sequences in our task, we adopt the Transformer encoder network~\cite{vaswani2017attention} for its ability to learn bi-directional contextualized features. 
 
Please see Section~\ref{sec: backbone} in Supplementary Materials for more implementation details.

\subsection{Simultaneous Clustering and Segmentation}
\label{sec: cluster}

\label{sec: clustering}

Different from the iterated clustering and segmentation popular in the literature~\cite{kamper2016unsupervised,kamper2017segmental,kamper2017embedded}, we use a simplistic simultaneous clustering and segmentation algorithm relying on the features learnt from Temporal Alignment Network (TAN).

For clustering, we apply K-Means clustering on the frame-wise features over all time steps and over all the videos in a given training set. The number of actons in the lexicon is determined by the number of clusters in K-Means. The lexicon can be stored as $K$ centroid frame features. This is efficient and parallelizable~\cite{JDH17}.

For segmentation, each frame is assigned to its nearest centroid in feature space in terms of $L_2$ distance. We then segment by considering a group of consecutive frames that share the same acton cluster as a segment, as shown in Figure~\ref{teaser}. We find that our feature representation works well with this simple segmentation method. We notice that ignoring negative samples within the same video clip plays a vital role in forming smooth and clean segments.

With this segmentation technique, we segment the whole training set into acton instances and look into each acton by showing all its instances. Some examples can be found in Figure~\ref{f2:qualitative}.

\section{Evaluation Metrics} 

\subsection{Kendall’s  Tau}
To evaluate the performance of the learned representations for temporal alignment, we use the Kendall's Tau \cite{kendall1938new, dwibedi2019temporal} metric, which is a statistical measure to determine how aligned in time two sequences are in a representation space. 
Given a pair of sequences that perform the same non-repetitive action in different dance recordings with possibly different motion speed, Kendall's Tau is calculated over every pair $(u_i, u_j)$ of frames with $i<j$ in the first video. 
For all these pairs, we retrieve their nearest frames $(v_p, v_q)$ in the second video with respect to the learned representation space. A pair is called concordant if $p<q$, and otherwise it is discordant. Kendall's Tau is defined over all pairs of frames in the first video as:
\begin{align}
    \tau = \frac{\text{\# concordant pairs - \# discordant pairs}}{n(n-1)/{2}}.
\end{align}
The metric ranges from $-1$ to $1$, indicating completely reversed and aligned, respectively. 

\label{entropy}
\subsection{Normalized Mutual Information (NMI)} 
NMI is a widely used metric for estimating the quality of clustering \cite{witten2002data}. Let $Y$ be a random variable that describes the event of a testing data sample being one of the ground truth actons, while $C$ is another random variable that describes the event of a sample belonging to one of the clusters. Mutual Information measures how much knowing one of the variables reduces uncertainty about the other. For example, if knowing that a sample belongs to a cluster determines its ground truth label ($\text{H}(Y|C) = 0$, where $\text{H}(\cdot)$ denotes the entropy of a variable), then all information conveyed by $C$ is shared with $Y$; the Mutual Information is the same as the uncertainty contained in $Y$. On the other hand, if $C$ and $Y$ are independent, namely knowing that $C$ does not give any information about $Y$ ($\text{H}(Y|C) = \text{H}(Y)$), then Mutual Information is zero. Formally,
\begin{align}
    \text{NMI}(Y, C) = \frac{2(\text{H}(Y) - \text{H}(Y|C))}{\text{H}(Y) + \text{H}(C)}.
\end{align}
The NMI is normalized by the sum of entropy in $Y$ and $C$, ranging from 0 to 1. A higher NMI value indicates better clustering results.

\subsection{Language Entropy}

Language entropy is a statistical measurement on the average uncertainty (conditional entropy) of the next letter (in our case, acton), when the preceding N-1 letters are known. Specifically, let $W_N$ represent a block of contiguous actons $(w_1, w_2, ..., w_N)$ and $\mathcal{W}_N$ represent all possible progressions of $W_N$ in the given sequences. The entropy $K_{N}$ and the conditional entropy $F_{N}$ in $N$ contiguous actons are defined as:
\begin{footnotesize}
\begin{align}
    K_{N} &= -\underset{W_N \in \mathcal{W}_N}{\sum} p(W_N) \log{p(W_N)} \\
    F_{N} &= \text{K}_{N} - \text{K}_{N-1}= -\underset{W_N \in \mathcal{W}_N}{\sum} p(W_N) \log{p(w_N|W_{N-1})}.
\end{align}
\end{footnotesize}
Finally, the language entropy in~\cite{shannon1951prediction} is defined as $H = \underset{N \rightarrow \infty}{\lim} F_{N}$. 

We evaluate language entropy on testing sequences of basic choreography in the dataset, where each sequence forms only a single handcrafted labanotation token but recurs several times. Hence, the language entropy for these sequences is expected to be low. In the experiment, lower language entropy indicates better results under this setting. We observe consistent results between methods when $N \rightarrow \infty$, so we use 2-gram evaluation $F_2$ instead of $H$ in the experiments to simplify computation.

\section{Experiments}
\begin{table}[t]
\centering
\caption{Comparison of different augmentation approaches on AIST++.}
\vspace{-2mm}

\begin{tabular}{lccc}
\toprule
Aug. Method & $\tau$ $\uparrow$ & NMI $\uparrow$ & $F_2 \downarrow$ \\
\midrule
full aug. & \textbf{0.80} & \textbf{0.79} & \textbf{0.81} \\
\midrule
w/o speed aug. & 0.77 & 0.74 & 0.90 \\
w/o rotation aug. & 0.72 & 0.58 & 1.66 \\
w/o translation aug. & 0.76 & 0.76 & 0.82 \\
\bottomrule
\end{tabular}

\label{t1}
\end{table}

\begin{table}[t]
\centering
\caption{Comparison of different negative sampling on AIST++.}
\vspace{-2mm}
\begin{tabular}{lcc}
\toprule
Negative Sampling & NMI $\uparrow$ & $F_2 \downarrow$  \\
\midrule
All frames in batch & 0.35 & 2.29 \\
Excluding close frames & \textbf{0.79} & \textbf{0.81} \\
\bottomrule
\end{tabular} 

\label{t2}
\vspace{-3mm}
\end{table}

\subsection{Datasets}
\paragraph{AIST++} For our experiments, we use AIST++ Dance Motion Dataset \cite{li2021learn}\footnote{Annotations licensed by Google LLC under CC BY 4.0 license.}, which contains 3D human keypoint annotations estimated for the AIST Dance Video Database \cite{aist-dance-db}. AIST++ contains 1$,$362 sequences with 3D skeletons\footnote{After removing sequences labeled as poorly reconstructed.}, evenly distributed across 10 dance genres. For each genre, $\sim$85\% of the sequences are of \emph{basic} choreography and $\sim$15\% of them are of \emph{advanced} choreography. Basic choreographies are repetitive while advanced choreographies are longer and more complicated dances improvised by the professional dancers. Advanced videos range from 27.4 seconds (1644 frames) to 46.7 seconds (2802 frames).
\paragraph{PKU-MMD} PKU-MMD is a large-scale action detection dataset with 1$,$076 videos, which last 3 to 4 minutes and contain about 20 action instances each. There are 51 action classes, e.g. drinking, waving hand. Among other modalities, only 3D skeletons are used for our experiment. We adopted the Cross-Subject Evaluation split and report mean average precision of different actions ($\text{mAP}_\text{a}$) on the testing set with IOU threshold $\theta=0.3$.
\begin{table}[t]
\centering
\captionof{table}{Comparison with baseline methods. Accuracy refers to genre classification accuracy on AIST++ in Section~\ref{qualitative} with K=100, averaged over 6 trials. $\text{mAP}_\text{a}$ refers to action detection score on PKU-MMD.}
\vspace{-2mm}
\resizebox{0.48\textwidth}{!}{%
\begin{tabular}{lcccccc}
\toprule
&\multicolumn{3}{c}{AIST++}&\multicolumn{3}{c}{PKU-MMD}\\
\midrule
& NMI $\uparrow$ & $F_2 \downarrow$ & Acc. $\uparrow$ & NMI $\uparrow$ & $F_2 \downarrow$ & $\text{mAP}_\text{a} \uparrow$ \\
\midrule
N/A & 0.48 & 1.84 & 0.36 & 0.36 & 3.61 & 0.302 \\
TCN & 0.41 & 1.92 & 0.42 & 0.42 & 3.60 & 0.317 \\
TCC & 0.62 & 1.34 & 0.39 & 0.39 & 3.59 & 0.303 \\ \midrule
TAN & \textbf{0.79} & \textbf{0.81} & \textbf{0.46} & \textbf{0.46} & \textbf{3.50} & \textbf{0.328} \\
\bottomrule
\vspace{-8mm}
\label{t:main}
\end{tabular}
}
\end{table}

\subsection{Ablation Studies}

At the stage of \textbf{frame-wise representation learning}, we evaluate by Kendall's Tau. For each of some randomly chosen basic choreographies, we sample two videos of different tempos and crop the first complete actions that are found in common. When using 3D skeleton joint coordinates to retrieve the nearest frame by $L_2$ distance, we obtain a $\tau=0.44$, significantly lower than using trained representations from Temporal Alignment Network. We start from our final setting and ablate different augmentation components. Results are shown in Table~\ref{t1}, from which the necessity of all three augmentations can be seen.

At the stage of \textbf{lexicon building}, we evaluate by NMI and language entropy averaged across 10 genres and 15 different K values from 10 to 150 with an interval of 10. We use the choreography label of each video as the ground-truth frame label for NMI calculation. After this inference, we obtain a reorganized text corpus, on which language entropy (approximated by $F_2$) is calculated.

In Table~\ref{t1}, we observe again that all three augmentation methods are indispensable, corroborating our conclusion at the representation learning stage based on Kendall's Tau. In Table~\ref{t2}, we can see that excluding close frames from the negative samples is beneficial to TAN.

\subsection{Comparison}
We compare our model with three baseline methods, two of which are prototypical methods for self-supervised sequence representation learning. We first briefly introduce them for completeness.
\begin{figure}[h]
\centering
\includegraphics[width=0.92\linewidth]{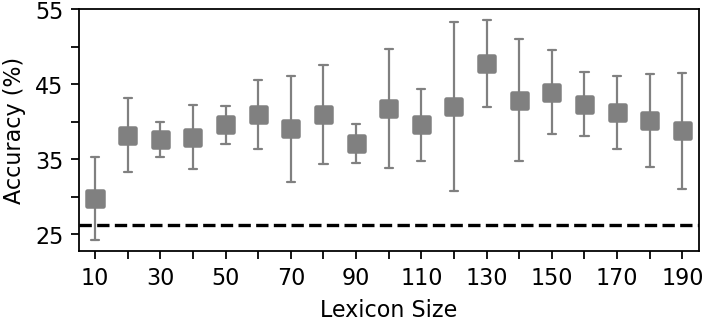}
\vspace{-3mm}  
\captionof{figure}{Mean and standard deviation over six trials of genre classification, plotted against the raw skeleton method.}
\label{fig:genre}
\end{figure}
\paragraph{Raw Skeleton Coordinates (N/A)} For this method, we use 3D skeletons directly as input for clustering. To simulate the $L_2$ distance in the frame embedding space, we calculate the $L_2$ skeleton distance after spatially normalizing the body center to the origin.

\paragraph{Time-Constrastive Networks (TCN) \cite{sermanet2018time}} 
TCN works by first sampling a certain number of anchor frames. For each of them, one positive frame within a preset time interval and one negative frame outside a larger time interval are sampled. For each motion, two augmented views are used to simulate the multi-view TC loss, where the anchor and positive frames come from the same view while negatives come from the other. After feature extraction, a triplet margin loss \cite{balntas2016learning} is used to push positive features closer to the anchor feature while pushing away negative features.

\paragraph{Temporal Cycle-Consistency (TCC) \cite{dwibedi2019temporal}} TCC is a self-supervised representation learning method based on the task of temporal alignment between videos. Compared to TCC, our method does not assume that the same event is occurring in all the videos in a dataset. We thus modify the loss by cycling between two augmented views of one sequence. For a frame $a$ in the first view, we first find its (soft) nearest neighbour in the second view $b$, then force the nearest neighbour of $b$ in the first view to be $a$ via a cross entropy loss. 

\begin{figure*}[t]
\centering
\includegraphics[width=0.8\linewidth]{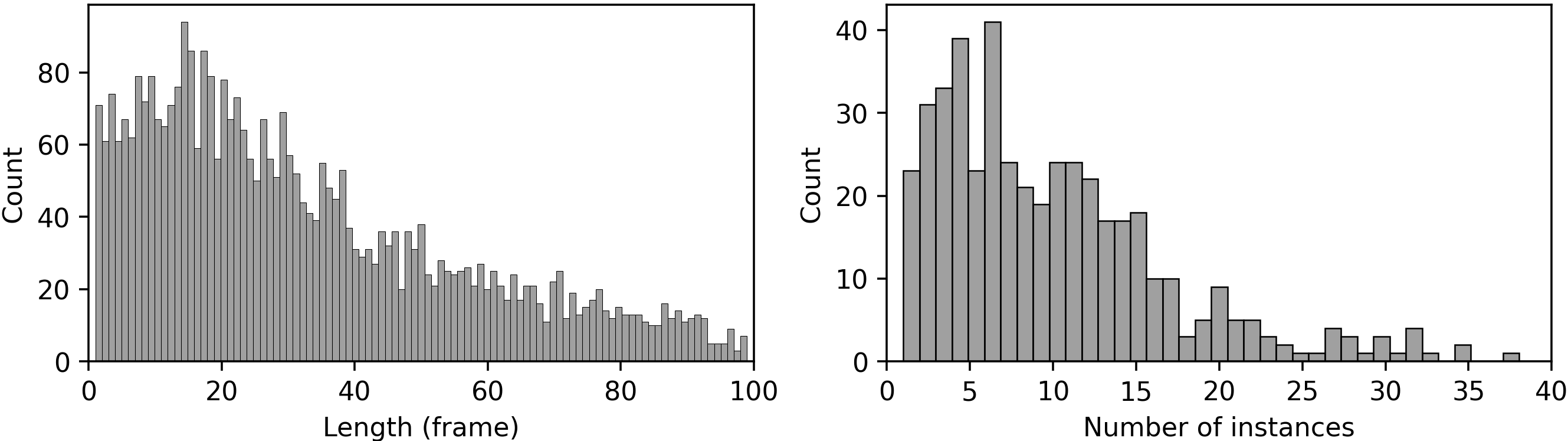}
\vspace{-1mm}
\caption{The left histogram shows the distribution of segment length across all segments. The right histogram shows the distribution of repetition number, or number of instances, for all discovered actons.}  
\label{distribution}
\end{figure*}

To compare different representation learning methods, we perform experiments by applying the same lexicon building algorithm on representations learnt from TCN, TCC, as well as using raw skeleton coordinates. For both metrics on both datasets, learnt features give better results than using raw skeleton coordinates, which justifies the use of temporal information in the representation for lexicon building. It can be seen in Table~\ref{t:main} that the lexicon built from our method shows better results on both clustering evaluation metrics and genre classification accuracy than the baselines. This might be attributed to the fact that these other methods do not utilize ground-truth frame correspondence between two augmented views.

\begin{table}[t]
\centering
\captionof{table}{Comparison with baseline method BLSTM~\cite{liu2017pku} on PKU-MMD on $\text{mAP}_\text{a}$ with varying $\theta$.}
\begin{tabular}{lcccc}
\toprule
$\theta$ & 0.1 & 0.3 & 0.5 & 0.7 \\
\midrule
BLSTM & 0.479 & 0.325 & 0.130 & 0.014 \\
TAN (ours) & \textbf{0.490} & \textbf{0.328} & \textbf{0.132} & \textbf{0.015} \\
\bottomrule
\label{t:theta}
\end{tabular}
\vspace{-5mm}
\end{table}
\subsection{Qualitative Results}

Under the real-world setting in Section~\ref{qualitative}, we visualize the acton instances (short skeleton sequences) clustered into actons in Figure~\ref{f2:qualitative}. To check the overall quality of a built lexicon, the length distribution of all segments and the distribution of the number of instances for all actons are shown in Figure~\ref{distribution}, with $K=450$. We can see that the discovered actons have a median duration of about $0.5$ second and have reasonable numbers of repetition. More qualitative results can be found in Section~\ref{app:qua} in Supplementary Material.

\begin{figure*}[h]
\centering
\includegraphics[width=\linewidth]{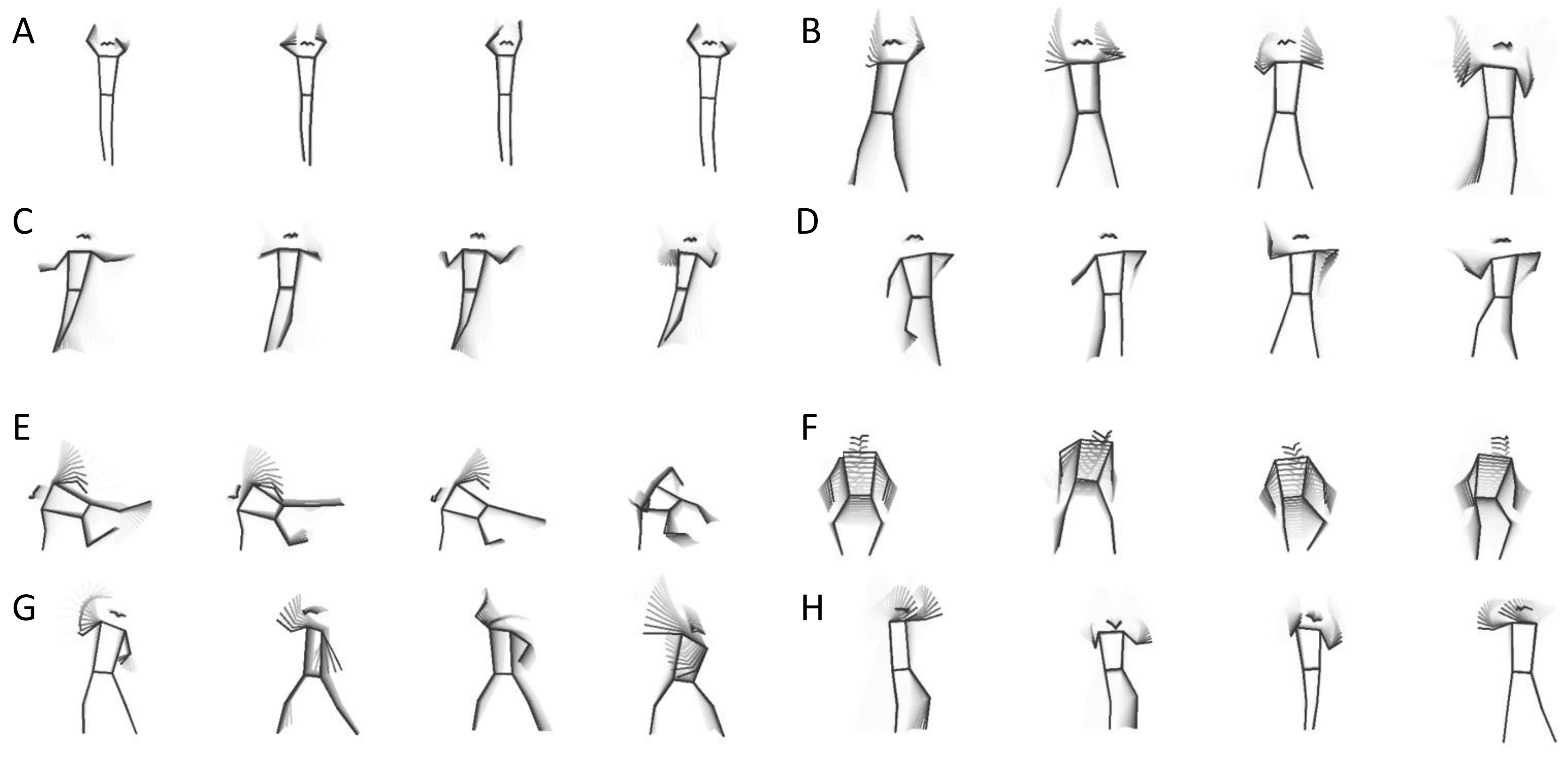}
\caption{Visualization of discovered actons from advanced videos. For each acton, four instances are shown side-to-side, with temporally earlier poses fading away. At the bottom left, an acton \textbf{G} with large intra-cluster variance is shown; to its right are four acton instances facing different directions but underpinned by the same motion.}
\vspace{-5mm}
\label{f2:qualitative}
\end{figure*}


\section{Applications}

\label{qualitative}
Acton discovery provides us with a discrete intermediate representation to support higher-level tasks on long kinematic videos. To demonstrate the utility of our tokenization for long kinematic video recognition, we conduct experiments to classify advanced choreography videos in AIST++ into ten genres and perform action detection on PKU-MMD. We also use the lexicon built on AIST++ to synthesize a \emph{never-ending} dance. Experimental details can be found in Section~\ref{detail3} in Supplementary Materials.
\subsection{Genre Classification} We use the LSTM~\cite{hochreiter1997long} text classification model with learnable and randomly-initialized word embeddings. We use a two-layer unidirectional LSTM \cite{hochreiter1997long} for sequence modeling. For tokenized input (ours), we use a learnable word embedding, while for the baseline, we use a linear layer for mapping 3D human skeleton input into a hidden space of the same dimension~\cite{li2018convolutional}. Note the significant reduction in input sequence lengths thanks to tokenization, which relieves the burden of LSTM. Test set accuracy across different lexicon sizes (K value) are presented in Figure~\ref{fig:genre}. We observe that the tokenization consistently aids classification for a reasonably large K, likely because the combinatorial structure reduces requirements on recognition system capability.
\subsection{Action Detection} After TAN pre-training and lexicon building, we learn a classifier by assigning each acton class to one of the 51 action classes defined in the PKU-MMD dataset according to maximum agreement in the training set. Using this classifier, we densely evaluate the sliding windows in different time scales across the test video, then select high-confidence local windows using the Non-Maximum Suppression (NMS) algorithm as our final action detection results. Our performance is compared with the best-performing method BLSTM~\cite{liu2017pku} in Table~\ref{t:theta}. It shows that our method is superior to BLSTM, especially with a less strict localization requirement (smaller $\theta$).
\vspace{-1mm}
\subsection{Action Composition} 
\vspace{-1mm}
In order to use the built lexicon to guide choreography (the art of designing novel motions), we present a two-stage method. First, we generate a random list of words by thresholding the $L_2$ distance between the last frame of the preceding word and the first frame of the successive word. Then we randomly instantiate each word by choosing one sequence from its cluster and splice these sequences with linear interpolation. An animated Figure~\ref{f2:qualitative} and a composed dance can be found at \url{https://bit.ly/3ctJlAy}.
\vspace{-2mm}

\section{Conclusion}
\vspace{-2mm}
In this work, we assert the importance of self-supervised acton discovery for long kinematic videos towards tokenized human dynamics representation. We then proposed a non-trivial framework composed of Temporal Alignment Network (TAN) and a Lexicon Building algorithm, and discuss some of the design choices via proposed evaluation metrics: Kendall's Tau, normalized mutual information, and language entropy. In the end, we successfully extract actons on completely unannotated action datasets, AIST++ and PKU-MMD, and reorganize them into a corpus. Among many, two directions may be especially worth pursuing in the future: discovering motifs in a corpus to build a hierarchical lexicon~\cite{shao2020finegym} and building a multi-channel lexicon by decomposing the human body into parts~\cite{hutchinson1977labanotation,li2020pastanet}.

{\small
\bibliographystyle{ieee_fullname}
\bibliography{egbib}
}
\clearpage
\newpage
\begin{onecolumn}
\begin{center}
    \section*{Supplementary Materials}
\end{center}

\section{Qualitative Results}
\label{app:qua}

More acton instances (short skeleton sequences) clustered into actons can be found in Figure~\ref{f2:qualitative'}. An animated version can be found in the supplementary demo video. 

\begin{figure}[h]
\centering
\includegraphics[width=\linewidth]{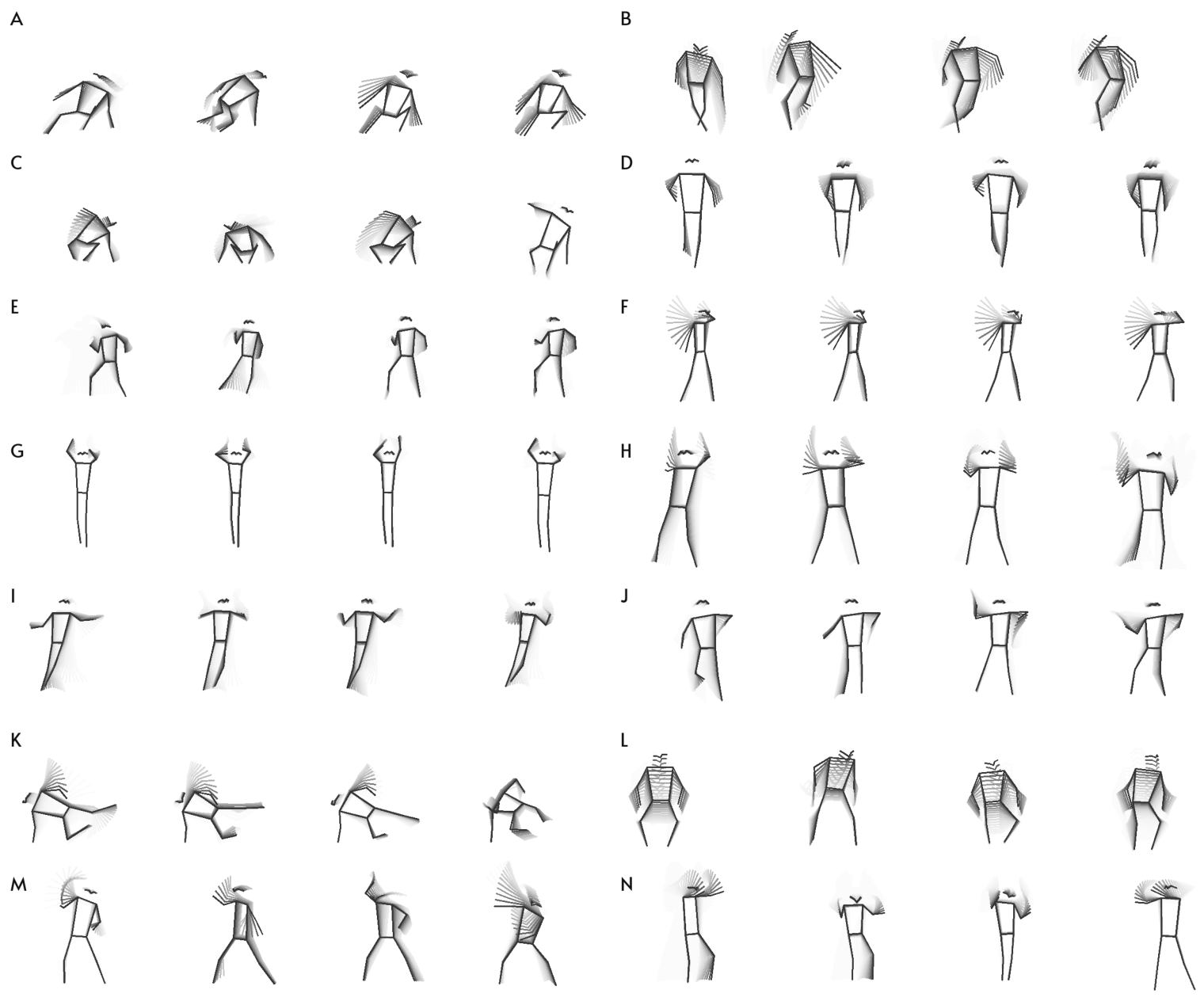}
\caption{Continuation of Figure~\ref{f2:qualitative}, visualization of discovered actons from advanced videos. For each acton, four instances are shown side-to-side, with temporally earlier poses fading away. At the bottom left, an acton \textbf{M} with large intra-cluster variance is shown; to its right are four acton instances facing different directions but underpinned by the same motion.}
\label{f2:qualitative'}
\end{figure}
\clearpage
To qualitatively assess the quality of frame embeddings produced by Temporal Alignment Network (TAN), we use t-SNE to reduce the embedding space to two dimensions and plot frames from four basic videos with different tempos. Each of these four videos is a repetition of one basic motion. From Figure~\ref{tsne}, we can learn the following:
\begin{itemize}
    \item Embeddings of frames in a video form ordered \emph{lines}, which show that our embeddings possess the quality of temporal continuity. 
    \item Despite different video lengths due to different tempos, different lines of videos have roughly the same starting, turning and ending points. This shows that our embeddings are temporally aligned.
    \item In the left part, different lines are largely overlapping, in concordance with the fact that they are the same sequence of motions. This shows that the frame embedding is capturing skeleton features. Non-overlapping parts can be attributed to the dancers rendering the same motion differently each time.
    \item Four segments in each line are overlapped, which reflects the fact that there are repeated motions for four times in each video.
    \item In the right part, we can see how lexicon building is working. It generally over-segments one \emph{motion} from a human perspective into a sequence of actons. For a conceptual example, if a labanotation describes the dance as $AAAA=(A)^4$, as it is a repetition of the same motion $A$, our system represents it as $(abcd)^4$.
    \item In the bottom part, we can see that for all four videos, the same patterns are repeated two times in the middle. It is desired that an acton level pattern recognition system can identify this and group over-segmented actons into one larger acton. This inspired us to use language entropy as a metric.
\end{itemize}
\begin{figure}[h]
\centering
\includegraphics[width=0.8\linewidth]{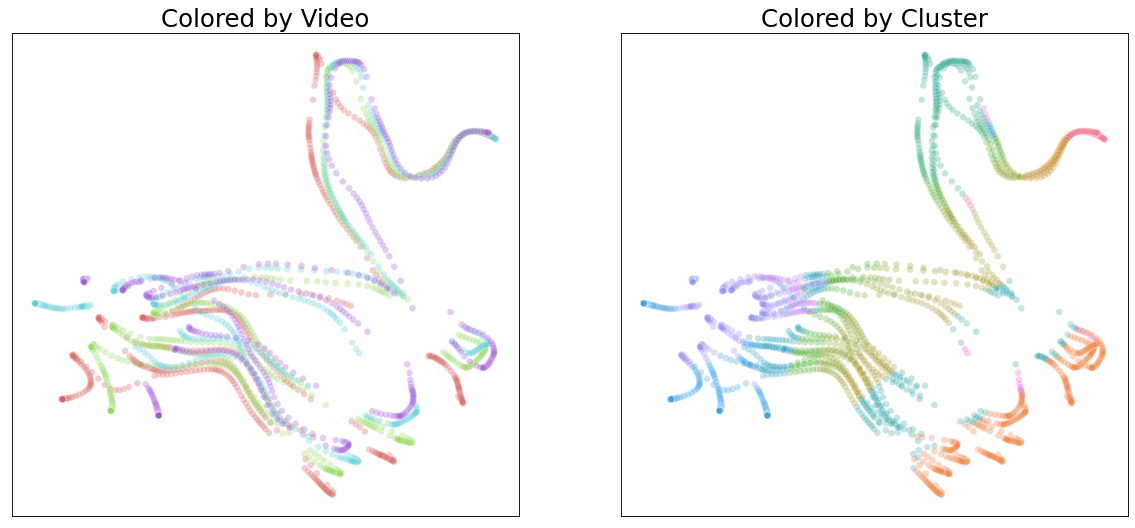}
\includegraphics[width=0.8\linewidth]{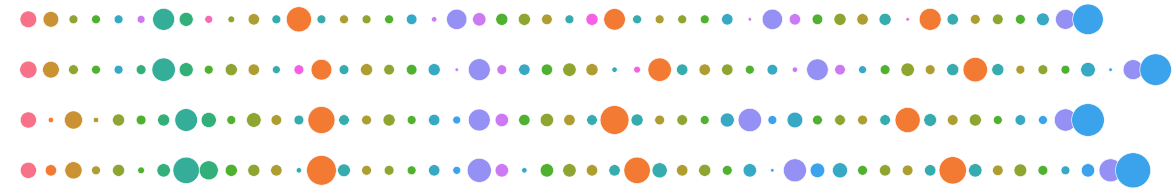}
\vspace{1mm}
\caption{t-SNE visualization of frame embeddings of videos with different tempos. On the left, frames are colored by the video they belong to; on the right, they are colored by the cluster they belong to. At the bottom is a visualization of tokenization results on the same four videos. Each line is one advanced video. Dot size corresponds to acton instance duration. Consistent colors are used for the t-SNE plot on the right and the tokenization visualization.} 
\vspace{0mm}
\label{tsne}
\end{figure}
\clearpage
Using the same setting as in Section~\ref{qualitative}, we show the reorganized corpus in Figure~\ref{color_dots}. Note that due to the specific segmentation approach used in our method, the same acton cannot appear continuously in the tokenization results.

\begin{figure}[h]
\centering
\includegraphics[width=0.7\linewidth]{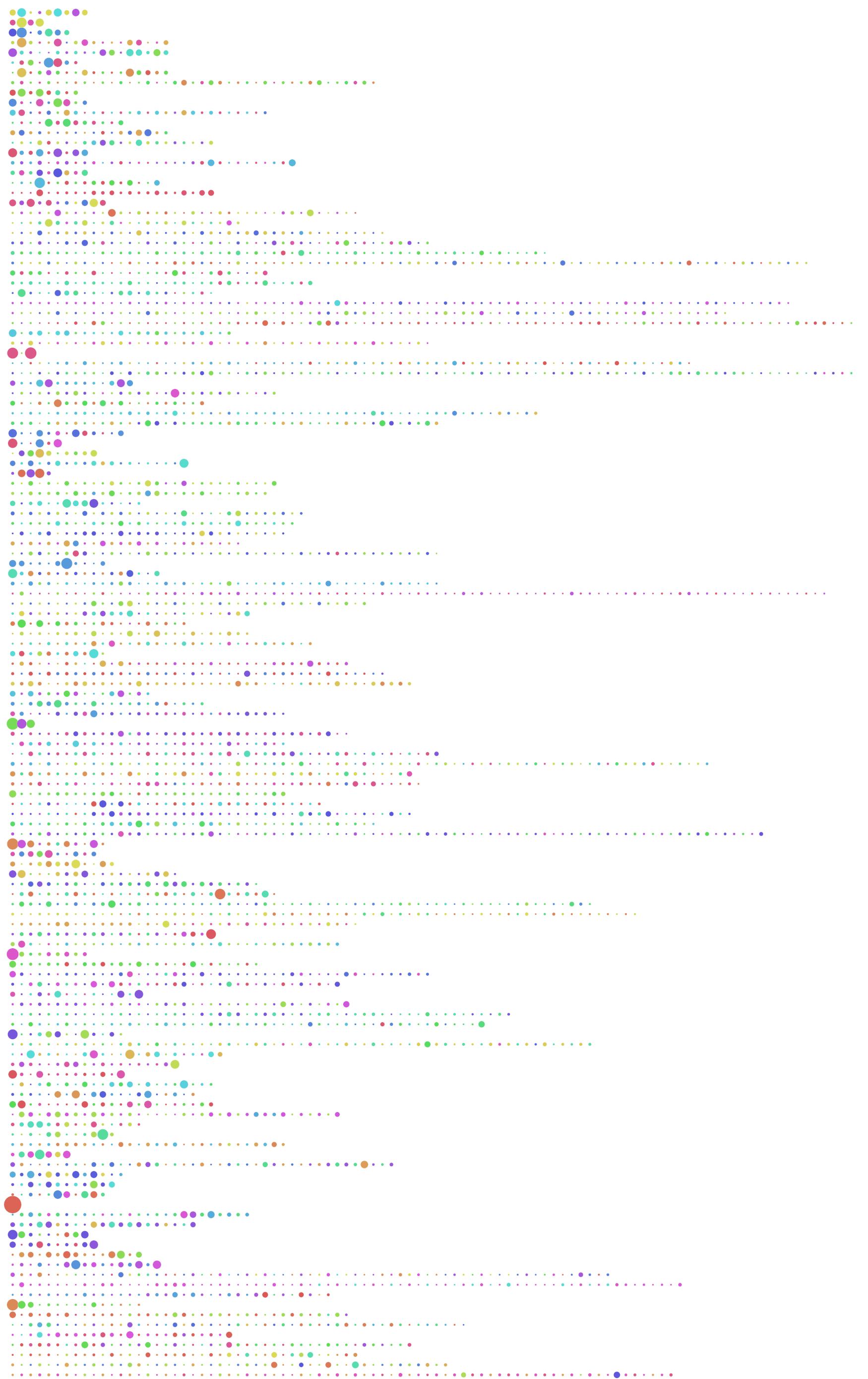}
\vspace{-3mm}
\caption{Illustration of segmentation on advanced videos in AIST++. Each line is one advanced video. Dot size corresponds to acton instance duration. Different colors represent different actons.}  
\vspace{0mm}
\label{color_dots}
\end{figure}
\clearpage
\section{Transformer Backbone Details} 
\label{sec: backbone}

For encoding the representation network $f(\cdot)$ of pose sequences in our task, we adopt the Transformer encoder network~\cite{vaswani2017attention} for its superior performance on recent vision applications~\cite{girdhar2019video,lohit2019temporal,plizzari2020spatial,carion2020end,zou2021end}.

Transformer builds layers of a representation by leveraging a global attention mechanism, and thus it is capable of learning frame-wise representations while encoding rich contextual information.
Particularly, the self-attention mechanism tends to identify the context and aggregates related information in the entire sequence to represent the current frame. 
This long-range property perfectly meets our need for context modeling. 

An encoder layer of the standard transformer architecture consists of a multi-head self-attention module and a feed-forward network (FFN)~\cite{vaswani2017attention}. Our transformer backbone network is built by stacking three encoder layers. Similar to word embedding in NLP tasks~\cite{collobert2008unified,pennington2014glove,bojanowski2017enriching}, 
a 3D skeleton sequence is simply flattened in the spatial dimension and embedded into a 512-dimensional hidden space using a two-layer MLP network. Due to the permutation invariance of the transformer architecture, standard positional encodings~\cite{vaswani2017attention} using sine and cosine functions of various frequencies are added to the input embeddings to make them sensitive to relative position in the sequence.

Note that there exists more sophisticated and potentially more effective spatial modeling architectures like GCN~\cite{yan2018spatial}, SRNet~\cite{zeng2020srnet} and Spatial Transformer~\cite{plizzari2020spatial} to model the highly structured skeleton data together with rich prior information.
However, examining them and comparing their performance is not the focus of this work.

\section{Experiment Details}
\label{detail1}

For AIST++, note that for each sequence of basic choreography, dancers are asked to dance six times with different BPMs (Beats Per Minute). Each basic choreography thus naturally forms a class. This is leveraged in part of our experiments. For each genre, we extract 20 motion sequences to form a validation set. Unless otherwise stated, representation learning is conducted on all advanced videos to prevent the system from abusing repetitive patterns in basic videos; the lexicon is built on the training split of basic videos; inference and metric calculation are done on the validation split of basic videos.  

We present the values of the hyper-parameters in Table~\ref{tab:hyperparams}. The representation learning experiments are conducted on four Nvidia V100 GPUs. 

\begin{table*}[!h]
\setlength{\tabcolsep}{0.5em}
\renewcommand{\arraystretch}{1.15}
\centering
\caption{Different hyper-parameters.}
\begin{tabular}{clc}
    \toprule
    & \textbf{Hyper-parameter} & \textbf{Value} \\ \midrule
    \multirow{10}{*}{Shared} & Batch Size & $32$ \\
    & Number of frames & $64$ \\
    & Optimizer & Adam \\
    & Peak Learning Rate & $2.5 \times 10^{-5}$\\
    & Weight Decay & $1.0 \times 10^{-6}$\\
    & Gradient Clip by Norm & $0.5$\\
    & Number of epochs & $500$ \\
    & Number of warmup epochs & $50$ \\
    & Learning rate Scheduler & \makecell{ Linear Warmup \\  Cosine Annealing } \\
    & Frames per second & $60$/$30$ (default to AIST++/PKU-MMD) \\ \midrule
    \multirow{4}{*}{Specific to TCN} & Anchor Number & $16$ \\
    & Positive Window Size & $2$ \\
    & Negative Multiplier & $4$ \\
    & Triplet Loss Margin & $2$ \\
    \bottomrule
\end{tabular}
\label{tab:hyperparams}
\end{table*}

\section{Augmentation Details}
\label{detail2}
In Table~\ref{tab:aug}, we show the hyper-parameters of different data augmentations. For translation augmentation and rotation augmentation, the probability of different values used are uniform, while for speed augmentation, we first uniformly sample a number between $1$ and the highest speed, then give a $50\%$ chance of using its reciprocal. 

\begin{table}[!h]
\setlength{\tabcolsep}{0.5em}
\renewcommand{\arraystretch}{1.15}
\centering
\caption{Hyper-parameters of the standard augmentations.}
\begin{tabular}{lc}
    \toprule
    \textbf{Hyper-parameter} & \textbf{Value} \\ \midrule
    Translation range & $\pm 0.2 \text{m}$ \\
    Rotation range & $\pm 18 ^\circ $ \\
    Speed range & $\frac{1}{2} \sim 2 $ \\
    \bottomrule
\end{tabular}
\label{tab:aug}
\end{table}

\section{Implementation Details of Applications}
\label{detail3}
\paragraph{Genre Classification} For our genre classification method, we create 
a new split of advanced choreographies with a larger test set in relation to the development set.
Representation learning and lexicon building is carried out on the development set. After lexicon building, we do nearest neighbour inference on the test set to obtain tokenizations of the test videos.  Hyperparameters can be found in Table~\ref{tab:recog}.

\paragraph{Model Selection} For both variants, we further split the AIST++ development split into a training split and a validation split by a ratio of $4:1$. Sizes of the sets can be found in Table~\ref{tab:recog}. We use the best-performing model on this validation split for testing. Results on the test split are reported. 

\begin{table}[!h]
\setlength{\tabcolsep}{0.5em}
\renewcommand{\arraystretch}{1.15}
\centering
\caption{Hyper-parameters used for long kinematic video recognition experiment}
\vspace{2mm}
\begin{tabular}{lc}
    \toprule
    \textbf{Hyper-parameter} & \textbf{Value} \\ \midrule
    Batch Size & $50$ \\
    Optimizer & Adam \\
    Peak Learning Rate & $1.0 \times 10^{-4}$\\
    Weight Decay & $1.0 \times 10^{-4}$\\
    Gradient Clip by Norm & $0.1$\\
    Number of epochs & $200$ \\
    Hidden Space Dimension & $256$ \\
    Train set size & $109$ \\
    Validation set size & $28$ \\
    Test set size & $61$ \\
    Class number & $10$ (default to AIST++) \\
    \bottomrule
\end{tabular}
\label{tab:recog}
\end{table}

\section{More on Proposed Metrics}

Here, we provide a proof that the conditional entropy sequence $\{F_N\}$ defined in Section~\ref{entropy} converges. Intuitively, the more characters that are conditioned on, the less uncertainty there is in predicting the next character. 

\begin{prop}
The sequence $F_N$ converges. 
\end{prop}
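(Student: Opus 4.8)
The plan is to recognize $F_N$ as a conditional entropy and then invoke the standard information-theoretic fact that, for a stationary source, the conditional entropy of the next symbol given a growing past is non-increasing and bounded below, hence convergent by the monotone convergence theorem.

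First I would rewrite $F_N$ in its information-theoretic form. From the definition $F_N = K_N - K_{N-1}$ and the chain rule for entropy, $F_N$ equals the conditional entropy $H(w_N \mid w_1, \dots, w_{N-1})$ of the $N$-th acton given the preceding block $W_{N-1}$, which matches the second expression already given for $F_N$ in Section~\ref{entropy}. This quantity is non-negative because conditional entropy is always $\ge 0$, which immediately supplies a lower bound of $0$ for the sequence.

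Next I would establish monotonicity, $F_{N+1} \le F_N$, which is where the only substantive assumption enters: the acton-generating process must be (at least weakly) stationary, so that the distribution of a contiguous block depends only on its length and not on its absolute position in the sequence. Granting stationarity, I would chain two elementary facts. By ``conditioning reduces entropy,'' $H(w_{N+1} \mid w_1, \dots, w_N) \le H(w_{N+1} \mid w_2, \dots, w_N)$, since dropping the conditioning variable $w_1$ can only increase uncertainty. By stationarity, shifting the index window by one gives $H(w_{N+1} \mid w_2, \dots, w_N) = H(w_N \mid w_1, \dots, w_{N-1}) = F_N$. Combining these two relations yields $F_{N+1} \le F_N$.

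With $\{F_N\}$ shown to be non-increasing and bounded below by $0$, the monotone convergence theorem gives convergence, so the limit $H = \lim_{N \to \infty} F_N$ is well-defined. The main obstacle here is conceptual rather than computational: the argument silently requires the source to be stationary, and I would state this assumption explicitly, noting that in the empirical setting it is approximated by estimating the block probabilities $p(W_N)$ from frequency counts over the reorganized corpus. If stationarity holds only approximately, $F_N$ need not be exactly monotone in the sample, but the same two inequalities show it cannot increase in the idealized stationary model, which is sufficient for the stated convergence claim.
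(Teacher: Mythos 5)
Your proof is correct, but it takes a genuinely different route from the paper's. Both arguments share the same skeleton --- $F_N = \text{H}(w_N \mid W_{N-1}) \geq 0$, plus monotonicity, plus the monotone convergence theorem for bounded real sequences --- but they establish monotonicity differently. The paper expands $F_N - F_{N+1}$ as explicit sums and applies Gibbs' inequality to drop the \emph{newest} symbol $w_N$ from the conditioning set, reducing the problem to the identity $\text{H}(w_{N+1} \mid W_{N-1}) = \text{H}(w_N \mid W_{N-1})$, i.e.\ that skipping position $N$ and marginalizing it out leaves the conditional entropy unchanged; this final cancellation is the delicate step and itself rests on an unstated distributional assumption. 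You instead drop the \emph{oldest} symbol $w_1$ via ``conditioning reduces entropy'' and then re-index by stationarity, $\text{H}(w_{N+1} \mid w_2, \dots, w_N) = \text{H}(w_N \mid w_1, \dots, w_{N-1})$, which is the classical Shannon/Cover--Thomas argument. Your version buys two things: it makes the stationarity hypothesis explicit rather than burying it in a cancellation, and it sidesteps the paper's most fragile step entirely, since the shift identity you use follows immediately from stationarity whereas the paper's skip-one-position identity does not hold for a general stationary source. (You also get the direction of the inequality right, $F_{N+1} \leq F_N$; the paper's prose states the reversed inequality $F_{N+1} \geq F_N$, though its displayed computation proves the correct one.) The paper's approach, for its part, is more self-contained at the level of the raw sums $p(W_N)$ actually estimated from the corpus, which is closer to how the quantity is computed in practice.
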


\begin{proof}
Since $F_N$ is a conditional entropy sequence, we know $F_N \geq 0$. We only need to prove $F_{N+1} \geq F_N$ for any $N \in \mathbb{N}^+$.

\begin{align*}
    F_N - F_{N+1} &= - \sum_{\mathcal{W}_N} p(W_N) \log p(w_N | W_{N-1}) + \sum_{\mathcal{W}_{N+1}} p(W_{N+1}) \log p(w_{N+1} | W_N) \\
    & = \sum_{\mathcal{W}_{N-1}} \left( \underset{w_N, w_{N+1}}{\sum} p(W_{N+1}) \log p(w_{N+1} | W_{N-1}, w_N) - \sum_{w_N} p(W_N) \log p(w_N | W_{N-1}) \right) \\
    & \geq \sum_{\mathcal{W}_{N-1}} \left( \underset{w_N, w_{N+1}}{\sum} p(W_{N+1}) \log p(w_{N+1} | W_{N-1}) - \sum_{w_N} p(W_N) \log p(w_N | W_{N-1}) \right) \\
    & = \sum_{\mathcal{W}_{N-1}} \left( \underset{w_{N+1}}{\sum} p(W_{N-1}, w_{N+1}) \log p(w_{N+1} | W_{N-1}) - \sum_{w_N} p(W_N) \log p(w_N | W_{N-1}) \right) \\
    & = 0.
\end{align*}
\end{proof}
The value it converges to is then defined as language entropy $H$ by \cite{shannon1951prediction}.

We measure the correlation strengths among the three proposed metrics, namely, Kendall's Tau, NMI, and language entropy, on $1000$ runs of experiments differing only in augmentation parameters with Pearson's $r$, Spearman's $\rho$, and Kendall's $\tau$ \cite{scikit-learn}. 
From Table~\ref{correlation}, we can interpret the results as indicating that NMI and language entropy have strong correlation. 

\begin{table}[!h]
\centering
\caption{Correlation among proposed metrics}
\vspace{-2mm}
\begin{tabular}{lccc}

\toprule
& $|r| \uparrow$ & $\rho \uparrow$ & $\tau \uparrow$ \\
\midrule
Kendall's Tau \& NMI & 0.45 & 0.45 & 0.31 \\
Kendall's Tau \& language entropy & 0.35 & 0.36 & 0.24 \\
NMI \& language entropy & 0.82 & 0.81 & 0.62 \\ 
\bottomrule
\label{correlation}
\end{tabular}
\end{table}

\section{Limitations}
\label{limitation}
One obvious limitation of our work is that it has not been applied to RGB image input yet, which significantly restricts usable datasets. Though there exists no difficulty in adapting the architecture design to extend our pipeline, we plan to replace the skeleton encoder with an image encoder in later works.

\end{onecolumn}
\end{document}